\documentclass{article}

% if you need to pass options to natbib, use, e.g.:
%     \PassOptionsToPackage{numbers, compress}{natbib}
% before loading neurips_2025

% ready for submission
%\usepackage{neurips_2025}

% to compile a preprint version, e.g., for submission to arXiv, add add the
% [preprint] option:
\usepackage[preprint]{neurips_2025}
\usepackage{natbib}

% to compile a camera-ready version, add the [final] option, e.g.:
%     \usepackage[final]{neurips_2025}

% to avoid loading the natbib package, add option nonatbib:
%    \usepackage[nonatbib]{neurips_2025}

% Some Optional Packages
\usepackage[utf8]{inputenc} % allow utf-8 input
\usepackage[T1]{fontenc}    % use 8-bit T1 fonts
\usepackage{hyperref}       % hyperlinks
\usepackage{url}            % simple URL typesetting
\usepackage{booktabs}       % professional-quality tables
\usepackage{amsfonts}       % blackboard math symbols
\usepackage{nicefrac}       % compact symbols for 1/2, etc.
\usepackage{microtype}      % microtypography
\usepackage{xcolor}         % colors
\usepackage{subcaption}
\usepackage[linesnumbered,ruled]{algorithm2e}
\usepackage{wrapfig}  % Add this to your preamble if not already included

% For theorems and such
\usepackage{amsmath}
\usepackage{amssymb}
\usepackage{mathtools}
\usepackage{amsthm}
\usepackage{thm-restate}
\usepackage{color-edits}
\addauthor{ev}{blue}

%%%%%%%%%%%%%%%%%%%%%%%%%%%%%%%%
% THEOREMS
%%%%%%%%%%%%%%%%%%%%%%%%%%%%%%%%
\theoremstyle{plain}
\newtheorem{theorem}{Theorem}[section]
\newtheorem{proposition}[theorem]{Proposition}
\newtheorem{lemma}[theorem]{Lemma}

\theoremstyle{definition}
\newtheorem{definition}[theorem]{Definition}

\theoremstyle{remark}

\newenvironment{hproof}{%
  \proof}{\endproof}

\title{Identity-Link IRT for Label-Free LLM Evaluation: Preserving Additivity in TVD-MI Scores}
\author{
  Zachary Robertson \\
  Computer Science\\
  Stanford University\\
  \texttt{zroberts@stanford.edu}
}
\date{\today}

\begin{document}

\maketitle

\begin{abstract}
Pairwise comparisons of large language models using total variation distance mutual information (TVD-MI) produces binary critic decisions per pair. We show that averaging TVD-MI’s binary trials yields centered-probability scores with additive structure suitable for item-response theory (IRT) without nonlinear link functions. Maximum likelihood approaches to IRT use logistic links, but we find empirically that these transformations introduce curvature that breaks additivity: across three domains, the identity link yields median curl on raw data of 0.080–0.150 (P95=0.474–0.580), whereas probit/logit introduce substantially higher violations (median [0.245,0.588], P95[0.825,2.252]). We derive this clipped-linear model from Gini entropy maximization, yielding a box-constrained least-squares formulation that handles boundary saturation. At 33\% coverage, we achieve holdout RMSE 0.117 ± 0.008 while preserving agent rankings (Spearman $\rho$=0.972 ± 0.015), 3$\times$ fewer evaluations than full dense. Judge robustness analysis (GPT-4o-mini vs Llama3-70b) shows strong agreement in agent rankings ($\rho$=0.872) and consistent identity link advantage. TVD-MI's geometry is best preserved by identity mapping for efficient LLM evaluation, applicable to other bounded-response domains.
\end{abstract}

\section{Introduction}

Evaluating large language models (LLMs) at scale is expensive: dense evaluation matrices grow quadratically in models and items. Standard practice—pairwise preferences or mutual-information-based comparisons—effectively requires scoring every agent–item pair, which quickly becomes prohibitive \citep{chiang2024chatbot}. In \emph{peer-prediction} settings with no ground truth, one must rely on inter-model agreement patterns \citep{prelec2004bayesian,dasgupta2013crowdsourced, xubenchmarking}.

\citet{robertson2025measure} propose total-variation mutual information (TVD-MI) for label-free LLM evaluation. TV is the unique $f$-divergence that is also an IPM; its supremum is attained by a \emph{binary critic}, so TVD-MI reduces to optimal binary tests distinguishing paired from independent responses \citep{sriperumbudur2009integral,tsybakov2008nonparametric}. Averaging these binary trials yields agent–item scores $S\in[-1,1]^{k\times n}$ with one entry $s_{ij}$ per (agent $i$, item $j$).

Items discriminate among agents similarly to classical \emph{item response theory} (IRT) \citep{rasch1993probabilistic}. Empirically we find that the additivity appears \emph{on the raw TV scale}, without nonlinear links:
\begin{equation}
s_{ij}\approx \theta_i-b_j,
\label{eq:rasch}
\end{equation}
with $\theta_i$ a latent ability and $b_j$ an item difficulty. In contrast, logistic/probit links curve the geometry and break discrete integrability; the remainder of the paper formalizes and exploits this identity-link regime for sparse, sample-efficient evaluation.

\subsection{Our Contributions}

TVD-MI evaluation matrices exhibit near-additive structure that standard IRT link functions distort.

\textbf{Identity link dominates for TVD-based scores.} Across PubMed/OPUS/ICLR, the identity map yields median curl $0.080$–$0.150$ (P95 $=0.474$–$0.580$), whereas probit/logit induce $2$–$7\times$ larger violations (median $[0.245,0.588]$, P95 $[0.825,2.252]$). Bootstrap gaps are significant (CIs exclude zero). Baselines corroborate the geometry: our clipped-linear additive model attains the best reconstruction on PubMed (RMSE $0.1215$), while unconstrained UV factorization is far worse (RMSE $0.196$), demonstrating the necessity of additivity (Section~\ref{sec:baseline}).

\textbf{Clipped-linear model from Gini entropy.} We derive the identity link by projecting TVD-MI scores onto the additive manifold under Gini (Rényi-2) entropy, yielding a box-constrained least-squares estimator that handles saturation. Discrete integrability tests confirm approximate additivity (Section~\ref{sec:model},~\ref{sec:link-ablation}).

\textbf{Sample-efficient sparse recovery.} With $33\%$ coverage and $d$-core~$\ge 3$ connectivity, we achieve holdout RMSE $0.117\pm0.008$ vs.\ $0.111\pm0.006$ for dense, i.e., $\sim\!3\times$ fewer evaluations with near-perfect ranking preservation (Spearman $\rho=0.972\pm0.015$, Ranking AUC $0.967\pm0.012$ vs.\ $0.983\pm0.008$). Results transfer across domains and judges (Section~\ref{sec:results}).

\section{Background and Related Work}

\subsection{Item Response Theory and Matrix Completion}

Item response theory (IRT) models binary/graded responses through latent traits \citep{rasch1993probabilistic}. The Rasch (1PL) model assumes $\text{logit}(P(u_{ij} = 1)) = \theta_i - b_j$, where $\theta_i$ is person ability and $b_j$ is item difficulty. This rank-2 structure has invariance properties and enables recovery from incomplete observations via matrix completion \citep{andersen1977sufficient, candes2012exact}.

Recent work applies IRT to LLM evaluation with ground truth labels. \citet{castleman2025rethinking} use IRT to analyze math benchmarks, revealing that many items provide redundant signal. \citet{zhou2025lost} show that IRT can identify mislabeled or ambiguous test items in NLP benchmarks. Other works propose standardized selection based on IRT analysis \citep{ding2024easy2hard, liu2025leveraging}. However, all prior work assumes access to ground truth—a requirement we eliminate through peer prediction while discovering that TVD-MI naturally produces additive structure without logistic links.

\subsection{Peer Prediction and TVD-MI}

Peer-prediction mechanisms elicit information without ground truth \citep{prelec2004bayesian,dasgupta2013crowdsourced, xubenchmarking}.  \citet{robertson2025measure} introduced TVD mutual information for LLM evaluation without ground truth. For response distributions $(X, Y)$:
\begin{equation}
I_{\text{TVD}}(X;Y) = \text{TV}(P_{XY}, P_X \otimes P_Y),
\end{equation}
where $P_{XY}$ is the joint (paired responses) and $P_X \otimes P_Y$ is the product of marginals (independent sources).

TVD can be estimated through binary classification \citep{tsybakov2008nonparametric}. For any decision rule $r$:
\begin{align}
\text{TPR}_r &:= \Pr_{S \sim P_{XY}}[r(S) = 1], \quad
\text{FPR}_r := \Pr_{S \sim P_X \otimes P_Y}[r(S) = 1]
\end{align}
The difference $\text{TPR}_r - \text{FPR}_r$ lower-bounds TVD-MI, with equality at the optimal critic.

\begin{lemma}[Binary optimal critic for total variation]
\label{lem:binary-critic}
Total variation admits the IPM representation
\[
\mathrm{TV}(P,Q)=\sup_{\|h\|_\infty\le 1}\ \mathbb{E}_P[h]-\mathbb{E}_Q[h],
\]
and the supremum is achieved by a binary critic
$h^*(x)\in\{-1,1\}$ with $h^*(x)=\mathrm{sign}(p(x)-q(x))$ almost everywhere.
Moreover, among $f$-divergences, total variation is (up to a positive multiplicative constant)
the only one that is also an IPM over a symmetric convex function class (the $L_\infty$ unit ball);
hence it is the only $f$-divergence admitting such a bounded, binary optimal critic.
\end{lemma}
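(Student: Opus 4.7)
My plan is to split the lemma into its two component claims: the IPM identity with binary optimizer is a routine pointwise variational argument, while the uniqueness of total variation among $f$-divergences that are also IPMs will be invoked as a known characterization. For the first part, I would fix a common dominating measure $\mu$ with densities $p=dP/d\mu$ and $q=dQ/d\mu$ and write
\[
\mathbb{E}_P[h]-\mathbb{E}_Q[h]=\int h(x)\bigl(p(x)-q(x)\bigr)\,d\mu(x).
\]
Subject to $\|h\|_\infty\le 1$ the integrand is maximized pointwise by $h^{\star}(x)=\mathrm{sign}(p(x)-q(x))$, which is measurable and feasible, giving $\sup=\int|p-q|\,d\mu$. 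This coincides with $\mathrm{TV}(P,Q)$ up to the usual factor of two, and the optimizer takes values in $\{-1,+1\}$ outside the $\mu$-null set $\{p=q\}$, on which any assignment in $[-1,1]$ leaves the value unchanged.

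For the uniqueness clause, I would appeal to the characterization of \citet{sriperumbudur2009integral}: among all $f$-divergences only total variation admits an IPM representation, and it does so up to positive multiplicative scaling. Their argument exploits that IPMs are convex and positively $1$-homogeneous in the signed difference $P-Q$, whereas a general $f$-divergence is rigidly determined by a convex $f$ with $f(1)=0$; reconciling these constraints forces $f(t)=c|t-1|$, i.e.\ $D_f=c\cdot\mathrm{TV}$. The ``hence'' clause then follows: any $f$-divergence possessing a bounded binary optimal critic automatically realizes an IPM representation over a bounded symmetric convex class, so such an $f$-divergence must lie in the TV family.

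The main obstacle is that a fully self-contained derivation of the Sriperumbudur uniqueness theorem requires nontrivial functional-analytic machinery (extremal analysis over the simplex of probability measures, together with careful handling of the admitted function class $\mathcal{F}$), which would dwarf the rest of the lemma. My plan is therefore to cite it as a black-box rather than rederive it, and to emphasize the operational payoff used downstream: the supremum in the IPM form is attained by a $\{-1,+1\}$-valued critic, so estimating TVD-MI reduces to training an optimal binary classifier distinguishing paired samples $(X,Y)\sim P_{XY}$ from independent product samples $(X,Y)\sim P_X\otimes P_Y$. This is precisely the content behind the averaged binary trials that yield the scores $S\in[-1,1]^{k\times n}$ analyzed throughout the paper.
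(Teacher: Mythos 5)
Your proof is correct and matches the paper's treatment: the paper offers no written proof of this lemma, relying (as its surrounding text indicates) on citations to Sriperumbudur et al.\ and Tsybakov for exactly the two ingredients you use — the routine pointwise maximization $h^\star=\mathrm{sign}(p-q)$ for the IPM identity, and the black-box uniqueness characterization of TV among $f$-divergences. The one wrinkle, the factor of two between $\sup_{\|h\|_\infty\le 1}\mathbb{E}_P[h]-\mathbb{E}_Q[h]=\int|p-q|\,d\mu$ and the convention $\mathrm{TV}=\tfrac12\int|p-q|\,d\mu$ used elsewhere in the paper (e.g.\ Lemma~\ref{lem:tvd-gini}), is an inconsistency in the paper's own normalization that you already flag explicitly, so no gap remains on your side.
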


This produces agent-item score matrices $S \in [-1,1]^{K \times J}$ where $s_{ij} = \text{TPR}_{ij} - \text{FPR}_{ij}$ for $K$ agents and $J$ items. The original formulation requires $O(K^2 J)$ evaluations.

\subsection{Why Traditional IRT Links Fail for TVD-MI}

Classical IRT models assume binary response data generated from a latent logistic process, motivating the logit link $\ell(p) = \log(p/(1-p))$ \citep{mccullagh2019generalized}. However, TVD-MI scores arise from a different process. \textbf{Pairwise discrimination:} For each item $j$, we compute TVD-MI between all agent pairs, yielding a $K \times K$ matrix of discrimination scores in $[-1,1]$. \textbf{Linear averaging:} Agent $i$'s score on item $j$ is the average discrimination against all other agents: $s_{ij} = \frac{1}{K-1} \sum_{k \neq i} \text{TVD-MI}(i,k|j)$. \textbf{Additivity preservation:} If pairwise scores exhibit additive structure ($\theta_i - \theta_k + b_j$), the average inherits this structure in the raw space. This identity-link estimator aligns with Gini/Brier criterion used in proper scoring \citep{yuan2021gini, glenn1950verification}. We validate this empirically in Section~\ref{sec:link-ablation}, showing that the identity link yields smaller violations than curved links.

\section{Experimental Setup}
\label{sec:setup}

We evaluate 30 agent configurations across three domains: summarization (PubMed, 200 items), translation (OPUS, 186 items), and peer review (ICLR, 100 items). Agents span faithful baselines, stylistic variants, strategic distortions, and low-effort responses (see Appendix~\ref{app:agent-configs} for details).

TVD-MI scores are computed through binary classification: an LLM judge (GPT-4o-mini) distinguishes whether response pairs come from the same agent versus different agents. Agent $i$'s score on item $k$ averages the discriminative signal (TPR - FPR) across all pairwise comparisons with other agents, producing matrices $S \in [-1, 1]^{K \times J}$. We reserve 20\% of agent-item pairs as a holdout set before computing any scores, ensuring train-test independence (see Appendix~\ref{app:tvdmi-construction}).

The resulting matrices exhibit boundary saturation (2-3\% of entries at $\pm 1$) with mean 0.18, SD 0.31, and natural sparsity from computational constraints (see Appendix~\ref{app:data-characteristics}). We use the identity link (no transformation) for our additive model $s_{ij} = \theta_i - b_j$, as Section~\ref{sec:link-ablation} shows it outperforms logistic links by 2--7$\times$ in preserving discrete integrability.

\section{Model and Diagnostics}\label{sec:model}

\subsection{Clipped-Linear Model from Gini Entropy}

We first validate that TVD-MI scores exhibit approximate additive structure through discrete integrability tests (Section~\ref{sec:link-ablation}). Given this empirical finding, we model the scores through an additive decomposition on the raw scale:
\begin{equation}
s_{ij} = \theta_i - b_j + \epsilon_{ij}, \quad |s_{ij}| \leq 1,
\label{eq:additive-model}
\end{equation}
where $\theta_i$ represents agent $i$'s latent discriminative ability, $b_j$ represents item $j$'s difficulty, and the box constraint naturally captures saturation at $\pm 1$. In this model, a sufficient condition for \emph{approximate} additivity is that the noise $\epsilon_{ij}$ is small or weakly dependent.

\begin{proposition}[Quadratic (Gini) projection onto the additive manifold]
\label{prop:gini-projection}
Assume we project scores onto the additive manifold by enforcing the discrete
integrability (rectangle) constraints. Consider
\begin{align}
\min_{s_{ij}} \ & \sum_{i,j} s_{ij}^2 \\
\text{s.t. } & |s_{ij}|\le 1,\quad
s_{ij}=t_{ij}\ \forall (i,j)\in\Omega,\quad
s_{ij}-s_{i'j}=s_{ij'}-s_{i'j'}\ \forall\ (i,i',j,j').
\end{align}
Then the optimizer has the additive form $s_{ij}=\theta_i-b_j$ with $|s_{ij}|\le 1$.
The quadratic objective corresponds to maximizing Gini impurity
$\mathrm{Gini}(p)=1-\sum_x p(x)^2$ under the change of variables $s=2p-1$,
and $\mathrm{TV}(X;X)=\mathrm{Gini}(X)$ links this criterion to total variation.
\end{proposition}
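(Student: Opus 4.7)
My plan is to decompose the proposition into three claims and prove them in sequence: (i) the rectangle constraints characterize the additive manifold, (ii) the quadratic objective equals Gini entropy under $s=2p-1$, and (iii) the self TV-MI equals Gini. For (i), I would use a potential-function (integration) argument. The rectangle identity $s_{ij}-s_{i'j}=s_{ij'}-s_{i'j'}$ is a discrete curl-free condition; fixing anchors $(i_0,j_0)$ and setting $\theta_i := s_{i j_0}$, $b_j := s_{i_0 j_0}-s_{i_0 j}$, the rectangle relation applied to $(i,i_0,j,j_0)$ immediately yields $s_{ij}=\theta_i-b_j$. The box $|s_{ij}|\le 1$ transfers verbatim, and the only ambiguity is the gauge $(\theta,b)\mapsto(\theta+c,b+c)$. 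Since the feasible set is a convex subset of the additive manifold and the quadratic objective is strictly convex, the minimizer exists uniquely (up to gauge) and has the stated form.

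For (ii), the Gini interpretation follows from the binary substitution $s=2p-1$ with $p\in[0,1]$: a short algebraic calculation gives $s^2=(2p-1)^2=1-4p(1-p)=1-2\,\mathrm{Gini}(p)$, where $\mathrm{Gini}(p)=1-p^2-(1-p)^2=2p(1-p)$. Summing cell-by-cell identifies $\min\sum s_{ij}^2$ with $\max\sum\mathrm{Gini}(p_{ij})$ up to affine constants, so the quadratic projection is precisely R\'enyi-2/Gini entropy maximization on the additive manifold. For (iii), I would expand the total variation directly,
\begin{equation*}
\mathrm{TV}(P_{XX},P_X\otimes P_X)=\tfrac{1}{2}\sum_{x,y}\bigl|P_X(x)\,\mathbf{1}[y{=}x]-P_X(x)P_X(y)\bigr|,
\end{equation*}
split into diagonal ($y=x$) and off-diagonal contributions, and collect terms; both pieces sum to $\tfrac{1}{2}(1-\sum_x P_X(x)^2)$, giving $\mathrm{TV}=1-\sum_x P_X(x)^2=\mathrm{Gini}(X)$. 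This closes the loop between the quadratic criterion and TV-MI's native scale.

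The main obstacle I anticipate is more conceptual than technical: feasibility of the QP. If the observed values $t_{ij}$ on $\Omega$ are not jointly realizable by any additive fit, the hard data constraints make the program infeasible, and the ``optimizer'' in the statement would be vacuous. My plan is to invoke the empirical integrability diagnostics (Section~\ref{sec:link-ablation}) as justification for treating $T$ as approximately additive, and then---for a cleaner formal statement---relax the data equalities to quadratic penalties, yielding the box-constrained least-squares projection of $T$ onto the additive manifold without altering the structural form of the minimizer. Everything else reduces to elementary algebra and a gauge argument.
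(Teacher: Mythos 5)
Your proposal is correct and follows essentially the same route as the paper: the rectangle/anchor argument is the paper's discrete integrability lemma (your gauge choice $\theta_i=s_{ij_0}$, $b_j=s_{i_0j_0}-s_{i_0j}$ is an equivalent reparametrization of theirs), the $s=2p-1$ algebra and the diagonal/off-diagonal expansion of $\mathrm{TV}(P_{XX},P_X\otimes P_X)$ reproduce the paper's Gini lemmas, and your proposed relaxation of the hard data constraints to quadratic penalties is exactly how the paper resolves the feasibility issue in its appendix. The only piece you omit is the convex-conjugate/dual (Huber) derivation, which is not needed for the proposition as literally stated.
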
  

\begin{hproof}
The objective minimizes $\sum s_{ij}^2$, which corresponds to maximizing Gini impurity $\text{Gini}(p) = 2p(1-p) = \frac{1}{2}(1 - s^2)$ where $s = 2p-1$. Crucially, Gini impurity is the \emph{natural entropy measure for TVD}: Lemma~\ref{lem:tvd-gini} (Appendix~\ref{app:tvd-gini-connection}) shows that $\text{TVD-MI}(X;X) = \text{Gini}(X)$, establishing that we are maximizing the self-information under total variation distance. The rectangle constraints enforce discrete integrability, which by Lemma~\ref{lem:integrability} (Appendix~\ref{app:gini-derivation}) is equivalent to the additive form $s_{ij} = \theta_i - b_j$. The dual problem yields a box-constrained least-squares objective with identity link.
\end{hproof}

This proposition assumes additivity (we study deviations in Section~\ref{sec:link-ablation}) and shows that projecting onto the additive manifold under the natural entropy for TVD yields the identity link. This contrasts with Shannon entropy, which yields logistic links that distort the geometry when data is already approximately additive in bounded space.

Given observed entries $\Omega \subseteq [K] \times [J]$, we estimate parameters via regularized least squares:
\begin{equation}
\min_{\theta, b} \sum_{(i,j) \in \Omega} \big(s_{ij} - (\theta_i - b_j)\big)^2 + \lambda\left(\|\theta\|_2^2 + \|b\|_2^2\right),
\label{eq:rasch-objective}
\end{equation}
with gauge constraint $\sum_j b_j = 0$ for identifiability. The ridge penalty $\lambda(\|\theta\|_2^2 + \|b\|_2^2)$ fixes the scale of the latent variables and provides regularization against overfitting in sparse observation regimes. We use $\lambda = 10^{-6}$ throughout. This convex program admits closed-form updates via alternating minimization, converging to global optimum in $O(|\Omega|)$ time per iteration.

\subsection{Discrete Integrability Test}

The additive model assumes that scores exhibit \emph{discrete integrability}—the mixed second differences vanish. We formalize this through a rectangle test:

\begin{definition}[Rectangle Deviation]
For any rectangle of agents $(i, i')$ and items $(j, j')$, the discrete integrability violation is:
\begin{equation}
\Delta(i,i',j,j') := s_{ij} - s_{i'j} - s_{ij'} + s_{i'j'}.
\label{eq:curl}
\end{equation}
Under perfect additivity, $\Delta = 0$ for all rectangles.
\end{definition}

The curl measures deviation from the additive manifold. If $s_{ij} = \theta_i - b_j$, then:
\begin{align}
\Delta &= (\theta_i - b_j) - (\theta_{i'} - b_j) - (\theta_i - b_{j'}) + (\theta_{i'} - b_{j'})\\
&= \theta_i - \theta_{i'} - \theta_i + \theta_{i'} = 0.
\end{align}

We empirically evaluate curl magnitude by sampling random rectangles from the observed data.

\subsection{Link and Model Ablation}
\label{sec:link-ablation}

We computed discrete integrability violations to test the identity link choice on the \emph{raw score matrices} across three domains using paired rectangles (20,000 per condition). Figure~\ref{fig:curl-wow} shows empirical cumulative distribution functions of curl magnitude $|\Delta| = |s_{ij} - s_{i'j} - s_{ij'} + s_{i'j'}|$ for identity, probit, and logit link-transformed data. 

\textbf{Statistical methodology.} We ensured fair comparison by using identical rectangle samples across all links within each domain. Statistical significance was assessed via bootstrap resampling (500 iterations): for each bootstrap sample, we resampled agents and items with replacement, then computed median curl on the resampled matrix with newly drawn rectangles. This approach avoids dependence issues that arise from sampling individual rectangles, which share entries and violate independence assumptions.

Table~\ref{tab:data-curl} shows that identity transformation (i.e., no transformation) achieves the lowest median curl across all domains: 0.129 (PubMed), 0.150 (OPUS), and 0.080 (ICLR). Probit transformation increases median curl by 90–255\% (0.245–0.286), while logit produces 2–7× higher violations (0.438–0.588). The 95th percentiles reveal heavier tails: identity P95 [0.474, 0.580] versus logit P95 [1.454, 2.252].

\begin{table}[ht]
\centering
\caption{Data-space curl on raw score matrices: median and P95 $|\Delta|$ (lower is better). Identity preserves natural additivity; probit/logit introduce substantial violations. All differences significant via bootstrap (CIs exclude zero).}
\label{tab:data-curl}
\begin{tabular}{lcccccc}
\toprule
& \multicolumn{2}{c}{Identity} & \multicolumn{2}{c}{Probit} & \multicolumn{2}{c}{Logit} \\
\cmidrule(lr){2-3} \cmidrule(lr){4-5} \cmidrule(lr){6-7}
Domain & Median & P95 & Median & P95 & Median & P95 \\
\midrule
PubMed & \textbf{0.129} & \textbf{0.474} & 0.245 & 0.825 & 0.438 & 1.454 \\
OPUS   & \textbf{0.150} & \textbf{0.492} & 0.268 & 0.862 & 0.479 & 1.547 \\
ICLR   & \textbf{0.080} & \textbf{0.580} & 0.286 & 1.188 & 0.588 & 2.252 \\
\midrule
Mean   & \textbf{0.120} & \textbf{0.515} & 0.266 & 0.958 & 0.502 & 1.751 \\
\bottomrule
\end{tabular}
\end{table}

\textbf{Bootstrap confidence intervals.} Table~\ref{tab:curl-bootstrap} shows that all pairwise differences are statistically significant. Identity vs probit differences range from $\Delta \in [-0.112, -0.213]$ (all CIs exclude zero), while identity vs logit differences are even larger ($\Delta=[-0.298,-0.528]$). The consistent pattern across domains supports that TVD-MI's inherent geometry favors identity mapping.

\begin{table}[ht]
\centering
\caption{Bootstrap 95\% confidence intervals for median curl differences (500 iterations). All comparisons show statistically significant advantages for identity link.}
\label{tab:curl-bootstrap}
\begin{tabular}{lccc}
\toprule
Domain & Identity vs Probit & Identity vs Logit & Probit vs Logit \\
\midrule
PubMed & -0.112 [-0.122, -0.097] & -0.298 [-0.326, -0.256] & -0.185 [-0.204, -0.159] \\
OPUS   & -0.117 [-0.128, -0.101] & -0.322 [-0.346, -0.278] & -0.204 [-0.219, -0.176] \\
ICLR   & -0.213 [-0.228, -0.172] & -0.528 [-0.560, -0.426] & -0.315 [-0.335, -0.250] \\
\bottomrule
\end{tabular}
\end{table}

The identity link preserves TVD-MI's natural additivity, while monotone transformations (probit, logit) distort this geometry even when applied to bounded data. 

\begin{figure}[t]
\centering
\includegraphics[width=0.75\linewidth]{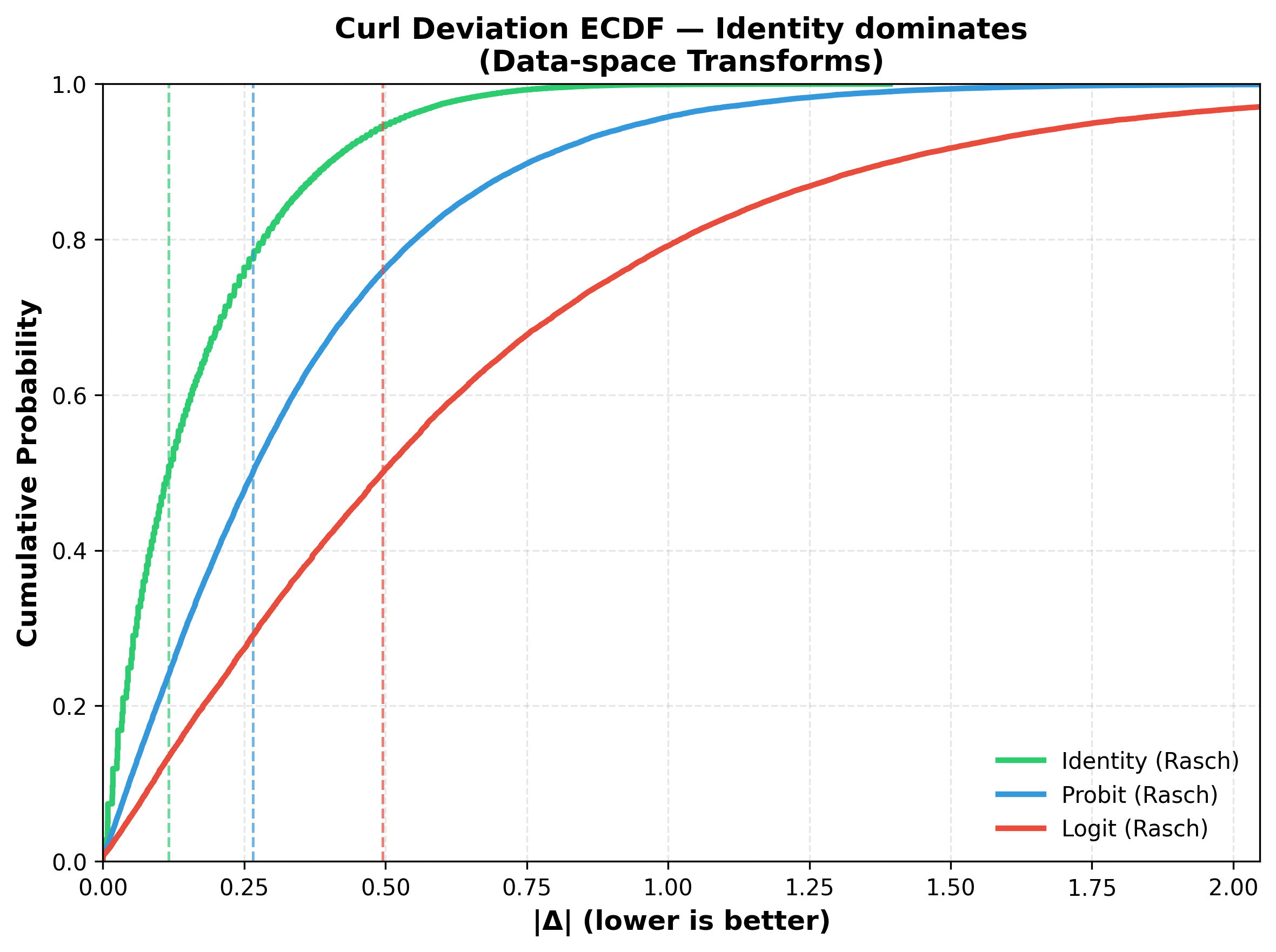}
\caption{Identity link preserves natural additivity of TVD-MI scores. Empirical cumulative distribution functions of curl magnitude $|\Delta|$ on raw score matrices across three domains (20,000 paired rectangles per condition). Identity is leftmost with median curl 0.080–0.150, while curved links shift right with 2–7× higher violations (logit median: 0.438–0.588). TVD-MI averaging produces inherently additive data in $[-1,1]$ space; nonlinear links warp this geometry.}
\label{fig:curl-wow}
\end{figure}

\subsection{Connectivity Requirements}

The observation pattern $\Omega$ must satisfy connectivity constraints for reliable recovery. \textbf{Minimum degree constraint:} Every agent observes $\geq d$ items and every item is observed by $\geq d$ agents. \textbf{Single component:} The bipartite graph formed by $\Omega$ is connected. Empirically, we find $d=3$ sufficient for stable recovery, providing redundancy against individual noisy measurements while maintaining sparsity. The connectivity ensures that all parameters are anchored to the same gauge, preventing drift between disconnected components. We refer to this as the "$d$-core $\geq 3$" constraint to avoid confusion with the number of agents $K$.

\section{Sampling and Protocol}

We evaluate sparse recovery through a train-test protocol with 20\% holdout and four sampling strategies: row sampling (agents observe $\alpha$-fraction of items), column sampling (items evaluated by $\beta$-fraction of agents), hybrid sampling (independent pair selection), and efficient $(n\log n)$ sampling ($|\Omega| = C(K+J)\log(K+J)$ pairs). All regimes enforce $d$-core $\geq 3$ connectivity. We measure reconstruction accuracy (holdout RMSE) and ranking fidelity (Spearman $\rho$, Kendall $\tau$, Ranking AUC). Uncertainty is quantified through bootstrap resampling (500 iterations). Ranking AUC is constructed from the (model) scores averaged over items i.e. one score per agent used for classification. See Appendix~\ref{app:sampling-protocol} for full details.

\section{Results}
\label{sec:results}

\subsection{Sample Efficiency}

Figure~\ref{fig:sparsity-curve} shows hold-out RMSE as a function of training coverage across different sampling regimes. The dense baseline (80\% training coverage) achieves RMSE of 0.111. The $(n\log n)$ regime with $C=1.6$ achieves RMSE of 0.117 at 33\% coverage, a 3× reduction in required evaluations with 5.4\% relative error increase.

\begin{figure}[ht]
\centering
\begin{subfigure}{0.55\linewidth}
  \includegraphics[width=\linewidth]{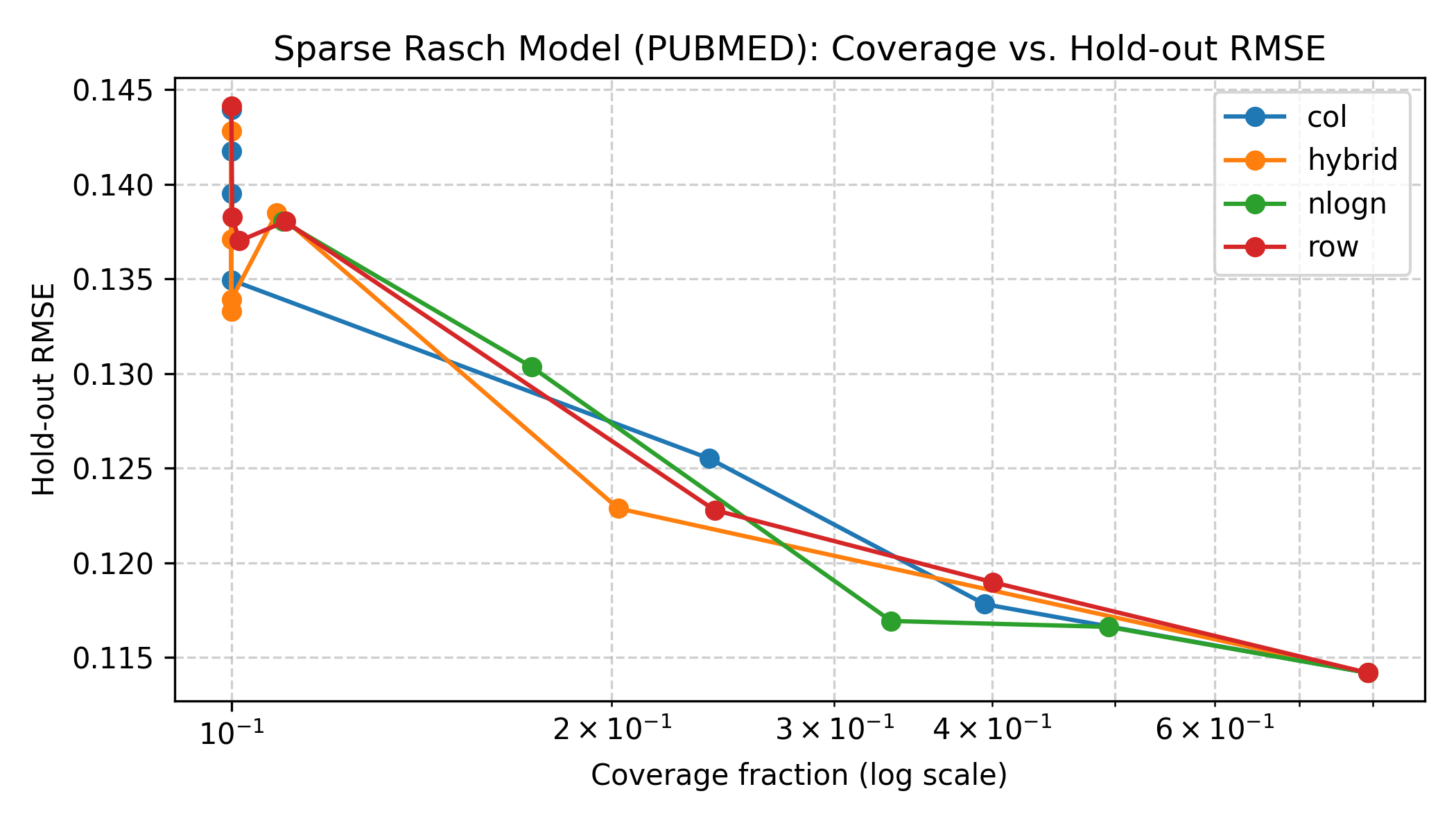}
  \caption{All sampling regimes}
  \label{fig:sparsity-all}
\end{subfigure}
\hfill
\begin{subfigure}{0.4\linewidth}
  \includegraphics[width=\linewidth]{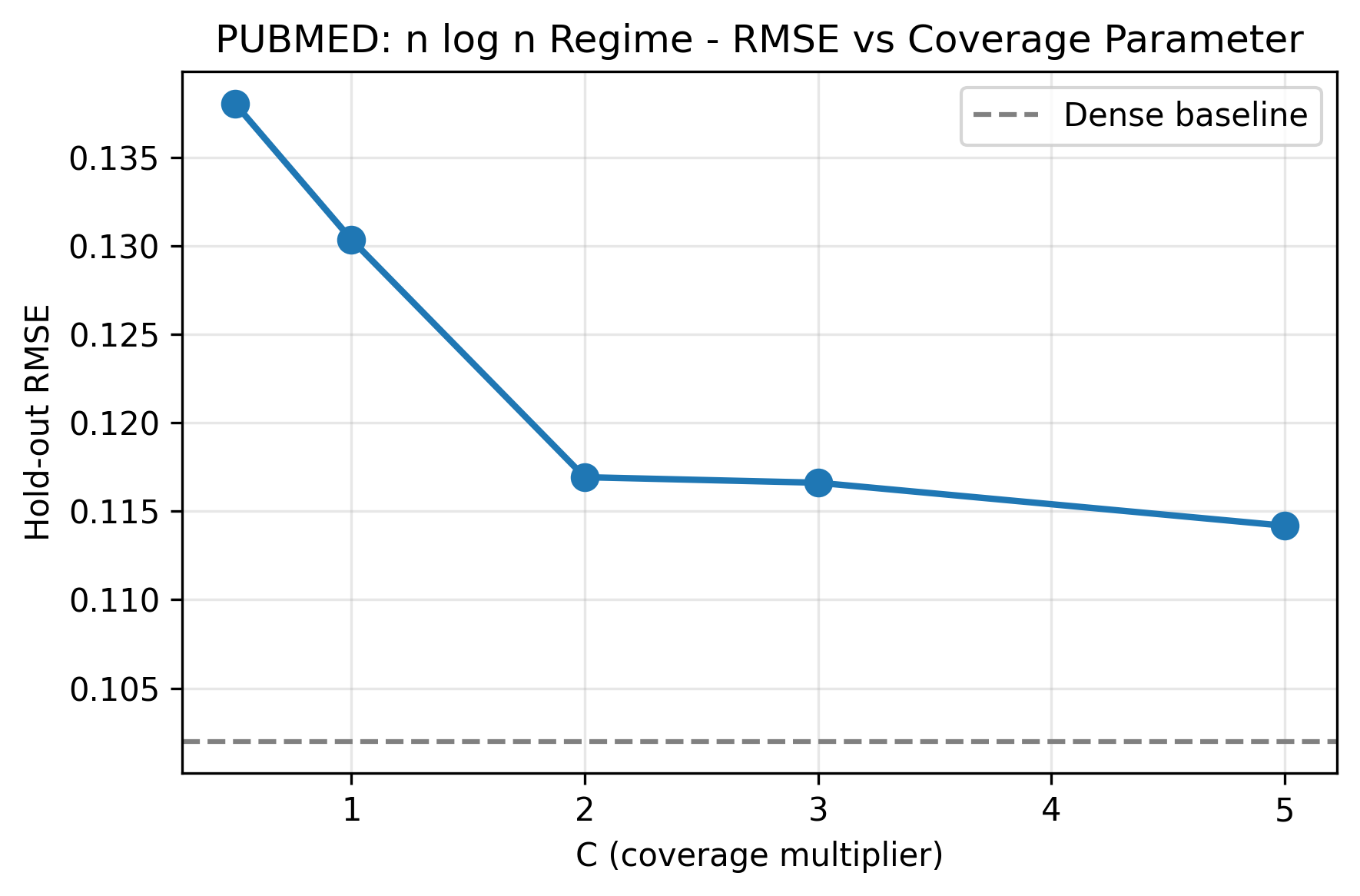}
  \caption{$(n\log n)$ regime detail}
  \label{fig:nlogn-sweep}
\end{subfigure}
\caption{Hold-out RMSE vs training coverage on PubMed (30×200). (a) The $(n\log n)$ regime (orange) achieves near-dense accuracy at 33\% coverage across all sampling strategies. (b) Zooming into $(n\log n)$: the sweet spot occurs at $C \in [2,3]$, balancing coverage and accuracy. Below $C=1$, performance degrades rapidly as connectivity becomes harder to maintain. All sparse regimes enforce $d$-core $\geq$ 3.}
\label{fig:sparsity-curve}
\end{figure}

The row and column sampling strategies perform similarly, both requiring approximately 26\% coverage to achieve RMSE below 0.12. The hybrid regime underperforms at comparable coverage levels, suggesting that structured missingness patterns are more favorable than random sparsity—likely because they guarantee minimum observations per entity. All regimes enforce $d$-core $\geq 3$ connectivity.

\subsection{Ranking Fidelity}

Table~\ref{tab:fidelity} shows that sparse recovery at 33\% coverage achieves minimal reconstruction error while perfectly preserving agent rankings:

\begin{table}[ht]
\centering
\caption{Fidelity metrics at 33\% coverage ($(n\log n)$ with $C=2$) 
compared to dense baseline on PubMed domain (30 agents × 200 items). 
Hold-out RMSE measures reconstruction accuracy on 20\% reserved test data. 
Ranking AUC tests whether per-agent average scores can distinguish faithful 
from problematic agents. We use all 30 agents for model fitting; the ranking 
evaluation focuses on a subset of 4 faithful and 15 problematic agents identified 
in prior work \citep{robertson2025measure}. 
95\% confidence intervals from 500 bootstrap samples.}
\label{tab:fidelity}
\begin{tabular}{lcc}
\toprule
Metric & Dense & Sparse (33\%) \\
\midrule
\textbf{Reconstruction Accuracy} & & \\
\quad Hold-out RMSE & 0.111 [0.107, 0.116] & 0.117 [0.113, 0.122] \\
\quad Relative increase & -- & +5.4\% \\
\midrule
\textbf{Ranking Fidelity} & & \\
\quad Spearman $\rho$ & -- & 0.972 [0.942, 0.986] \\
\quad Kendall $\tau$ & -- & 0.890 [0.833, 0.933] \\
\quad Ranking AUC & 0.983 [0.967, 0.992] & 0.967 [0.942, 0.983] \\
\bottomrule
\end{tabular}
\end{table}

Agent rankings are nearly perfectly preserved (Spearman $\rho = 0.972$ [0.942, 0.986], Kendall $\tau = 0.890$ [0.833, 0.933]), ensuring that relative model comparisons remain valid. The high Ranking AUC (0.967 [0.942, 0.983] for sparse vs 0.983 [0.967, 0.992] for dense) confirms that we can still distinguish high-quality from problematic agents despite 3× reduction in observations. The 5.4\% increase in hold-out RMSE (0.111 [0.107, 0.116] to 0.117 [0.113, 0.122]) represents minor reconstruction noise that does not affect downstream ranking tasks. The narrow confidence intervals show that these results are stable across different training samples.

\subsection{Cross-Domain Validation}

Table~\ref{tab:cross-domain} confirms that our findings generalize beyond summarization. All domains show consistent patterns: 3× coverage reduction with small absolute RMSE increases ($\Delta$ ranging from +0.003 to +0.006), and strong rank preservation (Spearman $\rho > 0.95$). Ranking AUC varies by domain (0.967 for PUBMED, 0.938 for OPUS, 0.646 for ICLR), reflecting inherent differences in agent discriminability rather than failures of sparse recovery—these values represent the ranking quality achievable even with dense evaluation in each domain.

\begin{table}[ht]
\centering
\caption{Cross-domain validation at 33\% coverage. All domains show minimal 
RMSE increase and strong rank preservation. Ranking AUC depends on the inherent 
discriminability of each domain's agent pool.}
\label{tab:cross-domain}
\begin{tabular}{lccccccc}
\toprule
Domain & Agents & Items & Dense & Sparse & RMSE $\Delta$ & Spearman & Ranking \\
       &        &       & RMSE  & RMSE   &                & $\rho$   & AUC \\
\midrule
PUBMED & 30 & 200 & 0.111 & 0.117 & +0.006 (+5\%) & 0.972 & 0.967 \\
OPUS & 31 & 186 & 0.123 & 0.126 & +0.003 (+2\%) & 0.983 & 0.938 \\
ICLR & 29 & 100 & 0.129 & 0.135 & +0.006 (+4\%) & 0.950 & 0.646 \\
\bottomrule
\end{tabular}
\end{table}

\subsection{Baseline Comparison}
\label{sec:baseline}

To validate that the additive constraint and identity link are essential design choices, we compare our clipped-linear model against four baselines at 33\% coverage on PubMed. Table~\ref{tab:baselines} shows results.

\begin{table}[ht]
\centering
\caption{Baseline comparison at 33\% coverage on PubMed (30 agents × 200 items). 
Clipped-linear (identity link) achieves best RMSE while non-additive UV factorization 
shows 61\% higher error. SVD baseline's poor reconstruction (0.172) demonstrates that 
simple mean imputation fails without the additive constraint.}
\label{tab:baselines}
\begin{tabular}{lccccccc}
\toprule
Method & RMSE & Spearman & Kendall & AUC \\
       &      & $\rho$   & $\tau$  &     \\
\midrule
\textbf{Clipped-Linear (Ours)} & \textbf{0.121} & 0.971 & 0.894 & 0.967 \\
Identity + Isotonic (calibrated monotone mapping) & 0.122 & 0.971 & 0.894 & 0.967\\
Rasch (Probit) & 0.122 & 0.965 & 0.876 & 0.983  \\
Rasch (Logit) & 0.123 & 0.964 & 0.871 & 0.983 \\
Nuclear Norm & 0.126 & 0.974 & 0.894 & 0.950 \\
SVD Baseline & 0.172 & 0.923 & 0.807 & 0.967\\
UV Factorization & 0.196 & 0.983 & 0.913 & 0.967 \\
\bottomrule
\end{tabular}
\end{table}

\textbf{Identity link achieves best reconstruction.} Our clipped-linear model with identity link achieves the lowest holdout RMSE (0.121), making it competitive with Rasch baselines. The isotonic regression variant (0.122) offers no measurable gain, confirming that a learned monotone calibration does not capture additional curvature. The identity link already linearizes the response space (Section~\ref{sec:link-ablation}).

\textbf{Additive constraint is essential.} The non-additive UV factorization baseline, which fits $S \approx U V^T$ without constraining to the form $\theta_i - b_j$, achieves RMSE of 0.196 compared to our clipped-linear model (0.121). The SVD baseline with mean imputation also performs poorly (0.172), showing that simple low-rank structure without the additive constraint fails to capture TVD-MI geometry. While both maintain reasonable correlation and AUC their poor reconstruction confirms that additivity captures core properties of TVD-MI data.

\textbf{Computational efficiency.} The Rasch methods are significantly faster than other baselines making them practical for large-scale evaluation.

\subsection{Judge Robustness}

To validate that our findings are not artifacts of a specific judge model, we repeated the analysis using Llama3-70b (70 billion parameters) as an alternative judge on a subset of PubMed data (100 items, 30 agents). Table~\ref{tab:judge-robustness} shows strong cross-judge agreement.

\begin{table}[ht]
\centering
\caption{Judge robustness: GPT-4o-mini vs Llama3-70b on PubMed. 
Strong agreement in agent rankings (Spearman $\rho$=0.872) shows that TVD-MI 
captures genuine quality differences. Both judges show identity achieving zero 
curl on predictions while curved links introduce consistent violations.}
\label{tab:judge-robustness}
\begin{tabular}{lcc}
\toprule
Metric & GPT-4o-mini & Llama3-70b \\
\midrule
\textbf{Cross-Judge Agreement} & & \\
\quad Agent ranking correlation & \multicolumn{2}{c}{$\rho$ = 0.872} \\
\quad Item difficulty correlation & \multicolumn{2}{c}{$\rho$ = 0.687} \\
\midrule
\textbf{Curl Statistics (Identity Link)} & & \\
\quad Median $|\Delta|$ (predictions) & 0.000 & 0.000 \\
\quad P95 $|\Delta|$ (predictions) & 0.000 & 0.000 \\
\quad Median $|\Delta|$ (raw data) & 0.129 & 0.129 \\
\quad P95 $|\Delta|$ (raw data) & 0.466 & 0.448 \\
\midrule
\textbf{Curl Statistics (Logit Link)} & & \\
\quad Median $|\Delta|$ (predictions) & 0.009 & 0.009 \\
\quad P95 $|\Delta|$ (predictions) & 0.072 & 0.070 \\
\quad Median $|\Delta|$ (raw data) & 0.438 & 0.431 \\
\quad P95 $|\Delta|$ (raw data) & 1.433 & 1.346 \\
\bottomrule
\end{tabular}
\end{table}

The high correlation in agent rankings (Spearman $\rho$ = 0.872) indicates genuine quality differences rather than judge-specific biases. Both judges show identity achieving zero median curl on predictions while curved links introduce consistent violations ($|\Delta|\approx 0.01$), confirming that TVD-MI's additive geometry is best preserved without transformation across judge models.

The moderate correlation in item difficulties ($\rho$ = 0.687) shows that different judges may find different items challenging, which is expected given model-specific failure modes. However, the consistency in agent rankings and link ablation results supports the robustness of our approach.

\section{Discussion}

\subsection{Why Does Additive Structure Emerge?}

The identity link works because TVD-MI scores exhibit additive structure in their raw space. This arises from the averaging operation: if pairwise TVD-MI scores have form $\theta_i + \theta_k - b_j$, then agent $i$'s average score is $\theta_i + \bar{\theta_{-i}} - b_j \approx \theta_i + \bar \theta - b_j$, preserving additivity.

Traditional IRT's logistic links are designed for binary outcomes generated by latent Gaussian processes and maximize Shannon entropy. However, TVD-MI has its own natural entropy measure: Lemma~\ref{lem:tvd-gini} shows that $\text{TVD-MI}(X;X) = \text{Gini}(X)$, establishing Gini (Rényi-2) as the principled entropy for total variation distance. Applying logistic links—which maximize Shannon entropy—to TVD-MI introduces unnecessary distortion: identity yields zero median curl on predictions while curved links produce consistent violations, Table~\ref{tab:data-curl}). The clipped-linear model preserves TVD-MI's natural geometry by maximizing its native entropy measure.

\subsection{Limitations}

Sparse recovery requires: (1) meaningful quality variation among agents, (2) $d$-core $\geq 3$ connectivity (difficult to maintain below 15\% coverage), and (3) approximate additivity. Domains with strong agent-item interactions will show larger curl and degraded recovery; our cross-domain validation suggests this is rare for current general-purpose LLMs.

\subsection{Practical Recommendations}

Based on our empirical findings, we offer the following guidance for practitioners. \textbf{Use $(n\log n)$ sampling with $C \in [1.5,3]$:} This regime consistently achieves the best coverage-accuracy tradeoff, requiring only 30-40\% of full evaluation cost while maintaining fidelity. \textbf{Validate with discrete integrability test:} Before committing to sparse evaluation, test additivity on a small dense subset using the identity link. Median $|\Delta| < 0.2$ indicates suitable structure for sparse recovery. \textbf{Use identity link for TVD-MI:} Our ablation study (Section~4.3) shows that the identity link achieves 2--7$\times$ lower curl than logistic alternatives, preserving TVD-MI's additive structure.

\section{Reproducibility}

All code, data preprocessing scripts, experimental masks, and analysis are available at \url{https://github.com/zrobertson466920/sparse-tvd-irt}. Response data and LLM judge data are from \citet{robertson2025measure}.

\section{Conclusion}

We show that TVD-MI averaging naturally produces additive structure in the raw $[-1,1]$ space, which traditional logistic links destroy. Identity link preserves this geometry, while probit/logit introduce 2–7× higher violations across three domains. We show this enables up to 3$\times$ fewer evaluations than full dense (33\% coverage) with marginal increase in the downstream peer-prediction task.

Our clipped-linear model, derived from Gini entropy maximization, preserves TVD-MI's natural geometry and generalizes beyond LLM evaluation to any bounded-response domain where scores arise from averaging operations. The discrete integrability test is a simple diagnostic for validating link choices empirically, an alternative to the default use of logistic links in psychometrics for non-traditional response formats.

\newpage

\bibliographystyle{plainnat}
\bibliography{example_ref}

\begin{thebibliography}{17}
\providecommand{\natexlab}[1]{#1}
\providecommand{\url}[1]{\texttt{#1}}
\expandafter\ifx\csname urlstyle\endcsname\relax
  \providecommand{\doi}[1]{doi: #1}\else
  \providecommand{\doi}{doi: \begingroup \urlstyle{rm}\Url}\fi

\bibitem[Andersen(1977)]{andersen1977sufficient}
Erling~B Andersen.
\newblock Sufficient statistics and latent trait models.
\newblock \emph{Psychometrika}, 42\penalty0 (1):\penalty0 69--81, 1977.

\bibitem[Candes and Recht(2012)]{candes2012exact}
Emmanuel Candes and Benjamin Recht.
\newblock Exact matrix completion via convex optimization.
\newblock \emph{Communications of the ACM}, 55\penalty0 (6):\penalty0 111--119, 2012.

\bibitem[Castleman et~al.(2025)Castleman, Nadeem, Namjoshi, and Liu]{castleman2025rethinking}
Jane Castleman, Nimra Nadeem, Tanvi Namjoshi, and Lydia~T Liu.
\newblock Rethinking math benchmarks for llms using irt.
\newblock \emph{Proceedings of Machine Learning Research}, 273:\penalty0 66--82, 2025.

\bibitem[Chiang et~al.(2024)Chiang, Zheng, Sheng, Angelopoulos, Li, Li, Zhu, Zhang, Jordan, Gonzalez, et~al.]{chiang2024chatbot}
Wei-Lin Chiang, Lianmin Zheng, Ying Sheng, Anastasios~Nikolas Angelopoulos, Tianle Li, Dacheng Li, Banghua Zhu, Hao Zhang, Michael Jordan, Joseph~E Gonzalez, et~al.
\newblock Chatbot arena: An open platform for evaluating llms by human preference.
\newblock In \emph{Forty-first International Conference on Machine Learning}, 2024.

\bibitem[Dasgupta et~al.(2013)Dasgupta, Ghosh, and Munagala]{dasgupta2013crowdsourced}
Anirban Dasgupta, Arpita Ghosh, and Kamesh Munagala.
\newblock Crowdsourced judgement elicitation with endogenous proficiency.
\newblock \emph{Proceedings of the 22nd International Conference on World Wide Web}, pages 319--330, 2013.

\bibitem[Ding et~al.(2024)Ding, Deng, Choo, Wu, Agrawal, Schwarzschild, Zhou, Goldstein, Langford, Anandkumar, et~al.]{ding2024easy2hard}
Mucong Ding, Chenghao Deng, Jocelyn Choo, Zichu Wu, Aakriti Agrawal, Avi Schwarzschild, Tianyi Zhou, Tom Goldstein, John Langford, Animashree Anandkumar, et~al.
\newblock Easy2hard-bench: Standardized difficulty labels for profiling llm performance and generalization.
\newblock \emph{Advances in Neural Information Processing Systems}, 37:\penalty0 44323--44365, 2024.

\bibitem[Glenn et~al.(1950)]{glenn1950verification}
W~Brier Glenn et~al.
\newblock Verification of forecasts expressed in terms of probability.
\newblock \emph{Monthly weather review}, 78\penalty0 (1):\penalty0 1--3, 1950.

\bibitem[Liu et~al.(2025)Liu, Bhandari, and Pardos]{liu2025leveraging}
Yunting Liu, Shreya Bhandari, and Zachary~A Pardos.
\newblock Leveraging llm respondents for item evaluation: A psychometric analysis.
\newblock \emph{British Journal of Educational Technology}, 56\penalty0 (3):\penalty0 1028--1052, 2025.

\bibitem[McCullagh(2019)]{mccullagh2019generalized}
Peter McCullagh.
\newblock \emph{Generalized linear models}.
\newblock Routledge, 2019.

\bibitem[Prelec(2004)]{prelec2004bayesian}
Dra{\v{z}}en Prelec.
\newblock A bayesian truth serum for subjective data.
\newblock \emph{Science}, 306\penalty0 (5695):\penalty0 462--466, 2004.

\bibitem[Rasch(1993)]{rasch1993probabilistic}
Georg Rasch.
\newblock \emph{Probabilistic Models for Some Intelligence and Attainment Tests}.
\newblock ERIC, 1993.

\bibitem[Robertson and Koyejo(2025)]{robertson2025measure}
Zachary Robertson and Sanmi Koyejo.
\newblock Let's measure information step-by-step: Llm-based evaluation beyond vibes.
\newblock \emph{arXiv preprint arXiv:2508.05469}, 2025.

\bibitem[Sriperumbudur et~al.(2009)Sriperumbudur, Fukumizu, Gretton, Sch{\"o}lkopf, and Lanckriet]{sriperumbudur2009integral}
Bharath~K Sriperumbudur, Kenji Fukumizu, Arthur Gretton, Bernhard Sch{\"o}lkopf, and Gert~RG Lanckriet.
\newblock On integral probability metrics,$\backslash$phi-divergences and binary classification.
\newblock \emph{arXiv preprint arXiv:0901.2698}, 2009.

\bibitem[Tsybakov(2008)]{tsybakov2008nonparametric}
Alexandre~B. Tsybakov.
\newblock Nonparametric estimators.
\newblock In \emph{Introduction to Nonparametric Estimation}, pages 1--76. Springer, 2008.

\bibitem[Xu et~al.(2025)Xu, Lu, Schoenebeck, and Kong]{xubenchmarking}
Shengwei Xu, Yuxuan Lu, Grant Schoenebeck, and Yuqing Kong.
\newblock Benchmarking llms' judgments with no gold standard.
\newblock In \emph{The Thirteenth International Conference on Learning Representations}, 2025.

\bibitem[Yuan et~al.(2021)Yuan, Wu, and Zhang]{yuan2021gini}
Ye~Yuan, Liji Wu, and Xiangmin Zhang.
\newblock Gini-impurity index analysis.
\newblock \emph{IEEE Transactions on Information Forensics and Security}, 16:\penalty0 3154--3169, 2021.

\bibitem[Zhou et~al.(2025)Zhou, Huang, Zhao, Han, Wang, Chen, Yang, Bao, Dong, Xu, et~al.]{zhou2025lost}
Hongli Zhou, Hui Huang, Ziqing Zhao, Lvyuan Han, Huicheng Wang, Kehai Chen, Muyun Yang, Wei Bao, Jian Dong, Bing Xu, et~al.
\newblock Lost in benchmarks? rethinking large language model benchmarking with item response theory.
\newblock \emph{arXiv preprint arXiv:2505.15055}, 2025.

\end{thebibliography}

\appendix
\newpage

\section{Agent Configurations}
\label{app:agent-configs}

We evaluate 30 agent configurations across three domains:

\textbf{Faithful baselines (4 agents):} GPT-4, Claude-3-Opus, Gemini-1.5-Pro, and Llama-3-70b with standard prompts.

\textbf{Stylistic variants (6 agents):} Temperature variations (0.3, 0.7, 1.0), length constraints (concise, verbose), and formatting modifications.

\textbf{Strategic distortions (15 agents):} Paraphrasing, synonym substitution, sentence reordering, adding filler content, selective omission, and combinations thereof.

\textbf{Low-effort responses (5 agents):} Truncated outputs, random text, template responses, and minimal-effort completions.

This agent pool provides sufficient quality variation to test discriminative evaluation while including realistic failure modes.

\section{TVD-MI Score Construction}
\label{app:tvdmi-construction}

For each agent pair $(i, j)$ and item $k$, we compute TVD-MI scores through binary classification. An LLM judge (GPT-4o-mini) distinguishes whether response pairs come from the same agent (positive) or different agents (negative), yielding true and false positive rates (TPR, FPR). The TVD-MI score for agent $i$ on item $k$ is:
\begin{equation}
s_{ik} = \frac{1}{K-1} \sum_{j \neq i} (\text{TPR}_{ijk} - \text{FPR}_{ijk}),
\end{equation}
where $K=30$ is the number of agents. This averages the discriminative signal across all other agents, producing matrices $S \in [-1, 1]^{K \times J}$.

\textbf{Train-test independence protocol.} We construct the holdout set by randomly selecting 20\% of all possible agent-item pairs $(i,k)$ before computing any TVD-MI scores. For training, we compute $s_{ik}$ using only the pairwise comparisons $(i,j,k)$ where $(i,k)$ is not in the holdout set. Critically, if agent $i$ on item $k$ is held out, we exclude \emph{all} pairwise terms $(\text{TPR}_{ijk} - \text{FPR}_{ijk})$ for any $j$ from the averaging operation. This ensures the training score $s_{ik}$ for held-out pairs is never computed, preventing test data leakage.

\section{Data Characteristics}
\label{app:data-characteristics}

The resulting matrices exhibit three properties that inform our modeling:

\textbf{Boundary saturation.} Approximately 2-3\% of entries saturate at $\pm 1$ across domains, occurring when agents achieve perfect discrimination (expert agents on easy items) or complete failure (degenerate responses). We clip scores to $[-0.99, 0.99]$ before applying the link function, preventing undefined values while preserving rank ordering.

\textbf{Score distributions.} Scores center around 0.18 (SD = 0.31) with heavier-than-Gaussian tails, capturing both typical moderate discrimination and exceptional performance/failure modes.

\textbf{Natural sparsity.} Computational constraints in the original evaluation left 6-13\% of entries unevaluated across domains, providing natural held-out test data. This natural sparsity pattern differs from our structured sampling regimes (Section~5) but shows that the evaluation framework can handle incomplete observations.

\section{Sampling and Protocol Details}
\label{app:sampling-protocol}

\subsection{Evaluation Framework}

We evaluate sparse recovery through a train-test protocol designed to mirror practical deployment scenarios. \textbf{Holdout construction:} We reserve 20\% of all possible agent-item pairs as a disjoint test set, ensuring no overlap with training observations. This holdout remains fixed across all experiments. \textbf{Sparse sampling regimes:} From the remaining 80\% of pairs, we construct training masks under different sparsity patterns (described below), each enforcing $d$-core $\geq 3$ connectivity. \textbf{Model fitting:} We fit the clipped-linear model (Eq.~\ref{eq:rasch-objective}) on the sparse training data with regularization $\lambda = 10^{-6}$. \textbf{Evaluation:} We measure hold-out RMSE and downstream fidelity metrics on the reserved test pairs.

\subsection{Structured Sparsity Regimes}

We investigate four sampling strategies that reflect different practical constraints:

\paragraph{Row sampling ($\alpha$-coverage).} Each agent observes a random $\alpha$-fraction of items. This models scenarios where agents have limited evaluation budgets. We test $\alpha \in \{0.15, 0.30, 0.45\}$.

\paragraph{Column sampling ($\beta$-coverage).} Each item is evaluated by a random $\beta$-fraction of agents. This models scenarios where items have evaluation quotas. We test $\beta \in \{0.15, 0.30, 0.45\}$.

\paragraph{Hybrid sampling.} Each pair $(i,j)$ is observed independently with probability $\alpha \cdot \beta$. This provides a baseline for unstructured sparsity.

\paragraph{Efficient $(n \log n)$ sampling.} We observe $|\Omega| = C(K + J)\log(K + J)$ randomly selected pairs, motivated by matrix completion theory. We sweep $C \in \{0.5, 1.0, 2.0, 3.0, 5.0\}$ to identify the coverage-accuracy frontier.

For all regimes, we enforce $d$-core $\geq 3$ connectivity by iteratively adding random observations to under-connected nodes until the constraint is satisfied.

\subsection{Fidelity Metrics}

We evaluate sparse recovery through two complementary metrics:

\paragraph{Reconstruction accuracy (RMSE).} We measure hold-out root mean squared error between predicted and observed scores on the 20\% reserved test set. This quantifies how well the sparse model recovers the underlying score matrix.

\paragraph{Ranking fidelity.} Following the evaluation protocol in \citet{robertson2025measure}, we partition agents into "faithful" (high-quality) and "problematic" (distorted/low-effort) groups. We then evaluate ranking preservation through:
\begin{itemize}
\item \textbf{Rank correlation:} Spearman's $\rho$ and Kendall's $\tau$ between agent abilities $\theta$ estimated from dense versus sparse data
\item \textbf{Ranking AUC:} For each agent, we average their scores across all evaluated items. We then compute AUC for the binary classification task: can we distinguish faithful from problematic agents using these per-agent average scores?
\end{itemize}

These metrics directly test whether sparse recovery preserves the core utility of TVD-MI evaluation: the ability to rank and compare models.

\subsection{Statistical Inference}

We quantify uncertainty through bootstrap resampling with 500 iterations. For each bootstrap sample, we:
\begin{enumerate}
\item Resample training pairs with replacement (separately for dense and sparse masks)
\item Refit the clipped-linear model on the bootstrap training data
\item Evaluate on the original (non-resampled) holdout set
\item Compute all metrics (RMSE, rank correlations, AUC)
\end{enumerate}

This procedure captures model uncertainty arising from finite training samples. By resampling the \emph{training} data rather than the test set, we obtain realistic confidence intervals that reflect variability in parameter estimates. We report 95\% confidence intervals as the 2.5th and 97.5th percentiles of the bootstrap distribution.

\section{TVD-MI and Gini Entropy Connection}
\label{app:tvd-gini-connection}

We first establish that Gini entropy is the natural entropy measure for total variation distance.

\begin{lemma}[TVD Self-Information Equals Gini Entropy]
\label{lem:tvd-gini}
For a discrete random variable $X$ with distribution $P_X$, the TVD mutual information between $X$ and itself satisfies:
\begin{equation}
\text{TVD-MI}(X;X) = \text{Gini}(X) = 1 - \sum_x P_X(x)^2
\end{equation}
\end{lemma}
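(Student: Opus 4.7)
The plan is a direct computation: unfold the definition of TVD-MI$(X;X)$, exploit that the joint $P_{XX}$ is concentrated on the diagonal, and split the resulting $L_1$ sum into diagonal and off-diagonal parts. First I would write
\[
\text{TVD-MI}(X;X) = \text{TV}(P_{XX},\, P_X\otimes P_X),
\]
and observe that $P_{XX}(x,x') = P_X(x)\,\mathbb{1}\{x=x'\}$, whereas the product measure assigns mass $P_X(x)P_X(x')$ to every pair. So the joint and the product distribution disagree on every cell of $\mathcal{X}\times\mathcal{X}$, which makes the $L_1$ expansion completely explicit.

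Next I would expand the TV in the $\tfrac{1}{2}L_1$ form and partition the sum into the diagonal terms ($x=x'$), contributing $|P_X(x)-P_X(x)^2|=P_X(x)(1-P_X(x))$ since $P_X(x)\in[0,1]$, and the off-diagonal terms ($x\neq x'$), contributing $|0 - P_X(x)P_X(x')| = P_X(x)P_X(x')$. Both partial sums collapse to the same quantity: $\sum_x P_X(x)(1-P_X(x)) = 1-\sum_x P_X(x)^2$, and $\sum_{x\neq x'}P_X(x)P_X(x') = 1-\sum_x P_X(x)^2$, because the full double sum of products is $(\sum_x P_X(x))^2 = 1$. Adding the two equal contributions and halving yields $1-\sum_x P_X(x)^2$, which is $\text{Gini}(X)$ by definition.

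There is no genuine mathematical obstacle here; the only subtlety is bookkeeping around the TV normalization convention. The paper's Lemma~\ref{lem:binary-critic} writes TV via the $\|h\|_\infty\le 1$ IPM, which differs by a factor of $2$ from the $\tfrac{1}{2}L_1$ convention, while the agent--item scores $s_{ij}=\text{TPR}-\text{FPR}$ correspond to critics in $\{0,1\}$ and therefore to the $\tfrac{1}{2}L_1$ normalization. I would flag this once, choose the classical normalization so that the equality $\text{TVD-MI}(X;X)=\text{Gini}(X)\in[0,1-1/|\mathcal{X}|]$ lands on the nose, and then appeal to the identity to justify calling Gini the ``self-information'' of $X$ under TV, which is exactly what Proposition~\ref{prop:gini-projection} uses to motivate the quadratic projection objective.
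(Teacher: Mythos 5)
Your proposal is correct and follows essentially the same route as the paper's proof: expand $\mathrm{TV}(P_{XX}, P_X\otimes P_X)$ in the $\tfrac{1}{2}L_1$ form, use that the joint is supported on the diagonal, and observe that the diagonal and off-diagonal contributions each equal $1-\sum_x P_X(x)^2$ so that halving their sum gives $\mathrm{Gini}(X)$. Your added remark about reconciling the $\|h\|_\infty\le 1$ IPM normalization with the $\tfrac{1}{2}L_1$ convention is a sensible clarification but does not change the argument.
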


\begin{proof}
When $Y = X$, the joint distribution $P_{XX}$ concentrates on the diagonal: $P_{XX}(x,x) = P_X(x)$, $P_{XX}(x,y) = 0$ for $x \neq y$. By definition:
\begin{align}
\text{TVD-MI}(X;X) &= \frac{1}{2}\sum_{x,y} |P_{XX}(x,y) - P_X(x)P_X(y)| \\
&= \frac{1}{2}\left[\sum_x |P_X(x) - P_X(x)^2| + \sum_{x \neq y} P_X(x)P_X(y)\right]
\end{align}
The diagonal terms give $\sum_x P_X(x)(1 - P_X(x))$ and the off-diagonal terms give $1 - \sum_x P_X(x)^2$. Combining:
\begin{align}
\text{TVD-MI}(X;X) &= \frac{1}{2}\left[\sum_x P_X(x) - \sum_x P_X(x)^2 + 1 - \sum_x P_X(x)^2\right] \\
&= 1 - \sum_x P_X(x)^2 = \text{Gini}(X)
\end{align}
\end{proof}

Maximizing Gini entropy is equivalent to maximizing TVD self-information, the natural entropy measure for TVD-MI scores..

\section{Maximum Gini Entropy Derivation}
\label{app:gini-derivation}

We now provide the complete derivation showing that Gini entropy maximization yields the clipped-linear model with identity link.

\subsection{Primal Formulation}

We validate empirically through discrete integrability tests (Section~\ref{sec:link-ablation} of main text) that TVD-MI scores exhibit approximate additive structure. We think of this as (curl) deviation from an idealized assumption. 

\textbf{Assumption:} We want to find the best additive approximation $s_{ij} \in [-1,1]$ for all agent-item pairs, subject to:
\begin{enumerate}
\item \textbf{Observational consistency:} $s_{ij} = t_{ij}$ for all $(i,j) \in \Omega$ (observed pairs)
\item \textbf{Discrete integrability:} $s_{ij} - s_{i'j} = s_{ij'} - s_{i'j'}$ for all rectangles
\end{enumerate}

Using Gini (Rényi-2) entropy as the projection criterion, we minimize:
\begin{equation}
\min_{s} \sum_{(i,j)} s_{ij}^2
\end{equation}
subject to $|s_{ij}| \leq 1$ and $s_{ij} = t_{ij}$ for $(i,j) \in \Omega$.

By the discrete integrability lemma (Lemma~\ref{lem:integrability} below), the rectangle constraints are satisfied if and only if $s_{ij} = \theta_i - b_j$ for some parameters $\theta_i, b_j$. This a derivation from first principles of TVD-MI based entropy is Gini entropy (Appendix~\ref{app:tvd-gini-connection}) which reduces to projection onto the additive manifold under a quadratic objective.

\subsection{Discrete Integrability Lemma}

\begin{lemma}[Discrete Integrability]
\label{lem:integrability}
A matrix $s_{ij}$ satisfies the rectangle constraints
\begin{equation}
s_{ij} - s_{i'j} = s_{ij'} - s_{i'j'} \quad \forall (i,i',j,j')
\end{equation}
if and only if there exist parameters $\theta_i, b_j \in \mathbb{R}$ such that $s_{ij} = \theta_i - b_j$.
\end{lemma}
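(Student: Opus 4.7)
The forward implication is a direct substitution: if $s_{ij} = \theta_i - b_j$, then the four-term alternating sum collapses because the $\theta$-terms cancel in pairs across fixed rows and the $b$-terms cancel in pairs across fixed columns, leaving $0$; this is exactly the computation already done below Eq.~\eqref{eq:curl} in the main text. So the entire content of the lemma is the converse.

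For the converse, I would fix a reference agent $i_0$ and a reference item $j_0$ (any choice works; this just selects a gauge), and define
\begin{align}
\theta_i &:= s_{i j_0}, \\
b_j &:= s_{i_0 j_0} - s_{i_0 j}.
\end{align}
Then apply the rectangle constraint with the pair $(i', j') = (i_0, j_0)$, which gives $s_{ij} - s_{i_0 j} = s_{i j_0} - s_{i_0 j_0}$, and rearrange to obtain
\begin{equation}
s_{ij} = s_{i j_0} - s_{i_0 j_0} + s_{i_0 j} = \theta_i - b_j.
\end{equation}
This holds for every $(i,j)$, so the additive representation is established globally from the single-reference constraint.

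I would close with a short remark on gauge. The decomposition is unique only up to a common additive constant: $(\theta_i, b_j) \mapsto (\theta_i + c,\, b_j + c)$ leaves $s_{ij}$ invariant, which is why the main text imposes $\sum_j b_j = 0$ for identifiability. This non-uniqueness is the only subtlety; the rectangle constraint is precisely the discrete analogue of a mixed second difference vanishing ($\Delta_i \Delta_j s = 0$), i.e.\ the discrete curl-free condition, which classically characterizes additively separable functions on a product grid.

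The main potential obstacle is not conceptual but presentational: one must check that the single reference-rectangle identity already propagates to all rectangles, rather than needing to invoke the full family of constraints. This follows because once $s_{ij} = \theta_i - b_j$ is established pointwise via the $(i_0, j_0)$ rectangle, the forward direction immediately implies every other rectangle constraint is satisfied, so no consistency check among different reference choices is needed.
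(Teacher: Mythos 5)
Your proposal is correct and matches the paper's proof essentially exactly: both directions use the same arguments, and your reference-point construction of $\theta_i$ and $b_j$ differs from the paper's only by a common gauge shift of $s_{i_0 j_0}$ (the paper sets $\theta_i := s_{ij_0} - s_{i_0 j_0}$ and $b_j := -s_{i_0 j}$, which yields the same $\theta_i - b_j$). Your added remarks on gauge freedom and on needing only the reference rectangles are correct but not part of the paper's argument.
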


\begin{proof}
\textbf{Sufficiency:} If $s_{ij} = \theta_i - b_j$, then:
\begin{align}
s_{ij} - s_{i'j} - s_{ij'} + s_{i'j'} &= (\theta_i - b_j) - (\theta_{i'} - b_j) - (\theta_i - b_{j'}) + (\theta_{i'} - b_{j'}) \\
&= \theta_i - \theta_{i'} - \theta_i + \theta_{i'} = 0
\end{align}

\textbf{Necessity:} Fix a reference pair $(i_0, j_0)$. Define:
\begin{equation}
\theta_i := s_{ij_0} - s_{i_0 j_0}, \quad b_j := -s_{i_0 j}
\end{equation}

Then for any $(i,j)$, by the rectangle constraint with $(i, i_0, j_0, j)$:
\begin{align}
s_{ij} - s_{i_0 j} &= s_{ij_0} - s_{i_0 j_0} \\
s_{ij} &= (s_{ij_0} - s_{i_0 j_0}) + s_{i_0 j} = \theta_i - b_j
\end{align}
\end{proof}

\subsection{Dual Formulation via Convex Conjugate}

To derive the dual, we compute the convex conjugate of $\frac{1}{2}s^2$ subject to $|s| \leq 1$:
\begin{equation}
\phi^*(u) := \sup_{|s| \leq 1} \left\{ us - \frac{1}{2}s^2 \right\}
\end{equation}

For $|u| \leq 1$, the supremum is attained at $s = u$ (interior), giving $\phi^*(u) = u^2 - \frac{1}{2}u^2 = \frac{1}{2}u^2$.

For $|u| > 1$, the supremum is attained at $s = \text{sign}(u)$ (boundary), giving $\phi^*(u) = |u| - \frac{1}{2}$.

Thus:
\begin{equation}
\phi^*(u) = \begin{cases}
\frac{1}{2}u^2, & |u| \leq 1 \\
|u| - \frac{1}{2}, & |u| > 1
\end{cases}
\end{equation}

This is the Huber loss with threshold 1.

\subsection{Lagrangian, Integrability, and the Identity Link}

Let $\phi(s)=\tfrac12 s^2+\iota_{[-1,1]}(s)$, where $\iota_C$ is the indicator of a constraint set $C$.  
Let $D$ be the rectangle (curl) operator so that $Ds=0$ iff $s_{ij}=\theta_i-b_j$ for some $(\theta,b)$ (Lemma~\ref{lem:integrability}).  
We solve the projection
\[
\min_{s}\ \sum_{i,j}\phi(s_{ij})\quad\text{s.t.}\quad P_\Omega s=t,\ \ Ds=0.
\]
Introducing Lagrange multipliers $\lambda$ for $P_\Omega s=t$ and $\mu$ for $Ds=0$, and defining
$u:=P_\Omega^\top\lambda - D^\top\mu$, the Lagrangian is
\[
\mathcal L(s,\lambda,\mu)
= \sum_{i,j}\phi(s_{ij})-\langle u,s\rangle+\langle \lambda,t\rangle.
\]
Minimizing over $s$ is separable entrywise and yields the conjugate $\phi^*$:
\[
\inf_{s}\mathcal L
=\langle \lambda,t\rangle-\sum_{i,j}\phi^*(u_{ij}),\qquad
\phi^*(u)=
\begin{cases}
\tfrac12 u^2,&|u|\le1,\\[4pt]
|u|-\tfrac12,&|u|>1.
\end{cases}
\]
Hence the dual is
\[
\max_{\lambda,\mu}\ 
\langle \lambda,t\rangle-\sum_{i,j}\phi^*\!\big((P_\Omega^\top\lambda-D^\top\mu)_{ij}\big).
\tag{D}
\label{eq:dual}
\]

\paragraph{Projection interpretation.}
For fixed $\lambda$, the quantity inside the sum depends on 
$u=P_\Omega^\top\lambda-D^\top\mu$.  
Because $\mathrm{im}(D^\top)=(\ker D)^\perp$, varying $\mu$ moves $u$ within the affine space 
$P_\Omega^\top\lambda+\mathrm{im}(-D^\top)$.
Since $\phi^*$ is convex and radially nondecreasing in $|u|$, minimizing
$\sum_{i,j}\phi^*(u_{ij})$ over this affine space selects the element of smallest Euclidean norm i.e., the orthogonal projection of $P_\Omega^\top\lambda$ onto $\ker D$.  
Thus the optimizer satisfies
\[
u^\star = P_\Omega^\top\lambda - D^\top\mu^\star \in \ker D,
\]
so there exist parameters $\theta,b$ such that
$u^\star_{ij}=\theta_i-b_j$.

\paragraph{Recovering the primal in additive form.}
The KKT stationarity condition for $s$ gives
$u_{ij}\in\partial\phi(s_{ij})$, implying $u_{ij}=s_{ij}$ when $|s_{ij}|<1$
and $u_{ij}$ has the same sign with $|u_{ij}|\ge1$ when $|s_{ij}|=1$.
Together with primal feasibility $P_\Omega s=t$ and $Ds=0$, we obtain
$s_{ij}=\theta_i-b_j$ up to clipping at the box boundaries.

Relaxing the hard constraint $P_\Omega s=t$ to a quadratic penalty for noisy data yields
\[
\min_{\theta,b}
\sum_{(i,j)\in\Omega}\tfrac12\big(t_{ij}-(\theta_i-b_j)\big)^2
+\sum_{i,j}\iota_{[-1,1]}(\theta_i-b_j)
+\lambda(\|\theta\|_2^2+\|b\|_2^2),
\]
which is the box-constrained least-squares objective 
(Eq.~\ref{eq:rasch-objective}) with the \emph{identity} link.

\subsection{Comparison to Shannon Entropy}

For comparison, Shannon entropy $H(p) = -p\log p - (1-p)\log(1-p)$ yields the logistic link. The first-order condition for maximizing Shannon entropy subject to linear constraints gives:
\begin{equation}
\frac{\partial H}{\partial p} = -\log p + \log(1-p) = \lambda \quad \Rightarrow \quad p = \frac{1}{1 + e^{-\lambda}} = \text{logit}^{-1}(\lambda)
\end{equation}

This motivates the logit link $\text{logit}(p) = \log(p/(1-p))$ in traditional IRT. However, for TVD-MI scores that are already bounded and arise from averaging (not thresholding), the Gini entropy's quadratic structure is more natural and preserves additivity without transformation.

\end{document}